\documentclass[11pt]{article}
\usepackage[margin=1in]{geometry}
\usepackage[T1]{fontenc}
\usepackage{lmodern,microtype}
\usepackage{amsmath,amssymb,amsthm}
\usepackage[dvipsnames]{xcolor}
\usepackage[colorlinks=true,linkcolor=MidnightBlue,citecolor=MidnightBlue,urlcolor=MidnightBlue]{hyperref}
\DeclareMathOperator*{\E}{\mathbb{E}}
\newtheorem{theorem}{Theorem}
\newtheorem{lemma}{Lemma}
\newtheorem{definition}{Definition}
\title{Alignment via Generalization}
\author{Shai Shalev-Shwartz \and Amnon Shashua}
\date{October 2026}
\begin{document}
\maketitle

\begin{abstract}
AI systems can become better at achieving the goals we give them without becoming better at doing what we actually want. Studying this alignment problem rigorously requires a framework connecting measurable performance to intended behavior. We propose \emph{Aligned-PAC learning}, whose central idea is to reduce alignment to generalization: does success on the requirements used for training extend to requirements the system has not encountered? Rather than express our entire intention through a single training objective, we represent its different aspects through a collection of specific requirements. This perspective already yields strong theoretical results. We show how independent evaluation can provide rigorous evidence of alignment across the represented requirements, and establish how much evaluation is needed. We also develop an optimization method that, under explicit assumptions, combines learned models to satisfy the training requirements. The framework does not solve alignment in full: choosing requirements that faithfully capture our intentions, and understanding when learning will generalize to unseen requirements, remain open challenges. It provides a foundation for making these questions precise and studying them with the tools of learning theory.
\end{abstract}

\section{Introduction}

AI models have become increasingly powerful, with some of their most impressive achievements in mathematical reasoning. These successes suggest a compelling recipe: specify an outcome, reward the model for achieving it, and let learning discover how to get there. Yet the effectiveness of this recipe depends on a crucial assumption: that the objective we optimize faithfully represents the objective we actually care about. When this assumption fails, a model can become better at optimizing its reward without becoming better at the intended task. We define the alignment problem as the gap between the \emph{true objective} and the \emph{proxy} used for learning.

This gap is not new to machine learning. In binary classification, for example, we may care about the probability of a prediction error, measured by the zero-one loss, while optimizing the hinge loss for computational convenience. A gap can also arise from the choice of data. A cow detector trained and evaluated on images of cows in grassy fields may rely on the background and fail to recognize cows standing on sand. Even with the right loss function, success on the selected distribution need not imply success in the situations we care about. Thus, a learning proxy specifies both a loss function and a data distribution. In our framework, we additionally specify an acceptable-loss threshold for each proxy, making explicit when a requirement is satisfied.

Mathematics offers an exceptional case in which this gap can disappear. If the task is to prove a precisely stated theorem and success is determined by a sound proof checker, the criterion used to evaluate the model coincides with the desired outcome: a valid proof. This does not mean that every reward used in mathematical training is perfectly aligned, but it illustrates the advantage of a domain in which correctness can be specified precisely and verified independently. In many other domains, defining an equally faithful criterion is itself a central difficulty.

The alignment problem becomes more challenging as both our ambitions and our models grow. First, we ask models to perform tasks whose objectives are harder to specify and whose failures are harder to diagnose. ``Detect a cow'' admits a relatively direct assessment. In autonomous driving, however, our true objective is ``drive safely and comfortably''. This requires judgments across a much broader range of situations and tradeoffs, and we do not know how to fully express this objective as a loss function. A seemingly natural proxy is to imitate human drivers. Yet human behavior is not always desirable, and better imitation does not necessarily mean safer driving: a model may improve its average imitation loss by predicting routine trajectories more accurately, while responding less safely to rare hazards. Thus, improvement under the proxy can come at the expense of the true objective.

Second, more capable models can exploit subtler gaps in the proxy. A model asked to optimize code may produce a sophisticated implementation that passes every supplied test and achieves excellent benchmark performance, yet contains a subtle bug on an untested input~\cite{doubleai2026maxwell}. Such failures can arise from exploiting weaknesses in the evaluation procedure---a phenomenon known as reward hacking---or simply from failing to generalize beyond the supplied tests. Deliberate deception is not necessary for either failure: optimizing an incomplete criterion can be enough~\cite{williamsking2025safety,bengio2026agents}. Greater capability therefore makes the quality of the proxy more consequential.

Discussions of AI alignment often emphasize the three desiderata of being \emph{honest, harmless, and helpful}. These are important objectives, but naming them does not resolve the underlying difficulty. We still need to translate them into criteria that can guide learning and evaluation, and those criteria may leave gaps. The same difficulty arises in classification, code generation, and autonomous driving, even when the desired behavior is not naturally described by these three terms. Our focus is the more general problem of relating success under an operational proxy to success under the intended objective.

Alignment is difficult in part because we lack a general theory that makes this relationship precise and yields useful guarantees. Our central proposal is to reduce alignment to \emph{generalization}. Instead of attempting to encode the true objective in a single loss and distribution, we consider a family of proxies, each capturing an aspect of the intended behavior through its own loss function, data distribution, and acceptable-loss threshold. The learner observes some of these proxies; the question is whether its success extends to proxies it has not observed. This shifts attention from fitting a fixed proxy to generalizing across the requirements that the proxy is meant to represent. Generalization provides both a well-developed theoretical language and a practical evaluation principle: assess performance on held-out proxies, as well as on held-out examples within a proxy.

We formalize this perspective through \emph{Aligned-PAC learning}, a generalization of the Probably Approximately Correct learning framework. Ordinary PAC learning studies performance under a fixed loss and distribution using sampled examples. Aligned-PAC extends this view to a distribution over proxies, each specifying a loss function, an example distribution, and an acceptable-loss threshold, with a rule for combining performance across proxies. This introduces a question of \emph{proxy complexity}: how many proxies must a learner observe to perform well on the broader family? The resulting guarantees necessarily depend on how that family represents the intended objective and on the assumptions connecting observed and unobserved proxies. Aligned-PAC does not eliminate the need to specify meaningful requirements; it provides a framework for studying when learning from a limited set of such requirements can generalize to the others. In this way, we aim to bring the tools of learning theory to the alignment problem.

\section{The Aligned-PAC framework}

PAC learning is a fundamental framework for learning from data. 
In its general form, one needs to define a hypothesis class $\mathcal{H}$, 
an example space $\mathcal{X}$, 
a loss function $\ell : \mathcal{H} \times \mathcal{X} \to \mathbb{R}$, 
and a distribution $\mathcal{D}$ over $\mathcal{X}$. Then, the goal is to learn 
a hypothesis $h \in \mathcal{H}$ that (probably, approximately) minimizes 
the expected loss on the data, $\E_{x \sim \mathcal{D}} [\ell(h, x)]$, while only 
accessing $\mathcal{D}$ via i.i.d. samples.

Deep Learning made PAC learning powerful: the hypothesis class $\mathcal{H}$ is a family of neural networks 
or policies implemented by neural-network-based AI agents, and the learning algorithm searches for a hypothesis 
by minimizing the loss on the data via Stochastic Gradient Descent (SGD) with respect to the parameters of the model. 
This captures pre-training, in which the loss is the negative log-likelihood of the data under the model.
It captures supervised fine-tuning, in which the loss is the cross-entropy loss between 
the model's predictions and the true labels. It also captures reinforcement learning for reasoning: 
$\mathcal{D}$ is a distribution over prompts or tasks, and $\ell(h,x)$ is the expected negative outcome reward 
over trajectories generated by the model $h$ on task $x$. The model searches for intermediate steps 
to reach a final correct solution.

In this formulation, the loss function $\ell$ and the distribution $\mathcal{D}$ specify the objective against which learning is evaluated. A PAC guarantee relates performance on sampled examples to expected performance under this same loss and distribution. It therefore addresses generalization from observed to unseen examples, but does not by itself establish that minimizing this expected loss achieves the true objective of the task.

The alignment problem concerns this latter gap. Even a hypothesis that minimizes $\E_{x \sim \mathcal{D}}[\ell(h,x)]$ over $\mathcal{H}$ may fail to satisfy an important requirement if the loss does not adequately penalize its violation or the distribution assigns too little weight to the situations in which it matters. Such a failure can persist even when there is no estimation or optimization error. To address it, we need to evaluate behavior across the different requirements of the task, allowing each requirement to have its own loss function and data distribution.

We view the alignment problem as a generalization problem - instead of defining a single proxy that is perfectly aligned with the true objective of the task, we define a family of loss functions and data distributions that reflect different properties of the true objective of the task. This culminates in a new framework for alignment, which we call \emph{Aligned-PAC learning}.

To motivate Aligned-PAC, consider the problem of learning an end-to-end self-driving neural network. The goal is to learn a policy that takes as input sensor data (video streams from the cameras, ego car kinematics, etc.) and outputs the future trajectory the car should take. 
Our true objective is to have a safe and comfort driving, but this is hard to formally define. As a proxy loss function we use imitation learning: we aim to learn a distribution over future trajectories that maximizes the likelihood of observed human-driven trajectories.  

While imitating humans sounds like a good proxy, it has several disadvantages. Firstly, some common human behaviors are not desirable for a self-driving car. For example, many humans do not fully stop at a stop sign but instead roll slowly and then continue, while the law dictates full stop. Secondly, in a wide road, there is a high uncertainty on how exactly a human would drive (offsetting a bit to the left or right is totally fine). The model therefore must spread the distribution over different trajectories and suffer loss on the likelihood of the actual trajectory in each instance. Not spreading this distribution perfectly would lead to a slightly excess loss over the optimal distribution. On the other hand, given a rare hazard, predicting a 50\% probability to hit it and 50\% probability to overtake it would yield a much smaller penalty to the total loss, due to the rarity of the event. Now consider two models A and B, where A is better than B in wide roads, but worse in rare hazards. To our proxy loss, model A may appear to be better than model B, while model A is catastrophically worse than B with respect to the true objective. 

Practitioners circumvent this problem by manually designing a family of proxies. Each proxy captures a different aspect of the true objective of the task. For example, one proxy may capture the ability to stop at a stop sign, while another proxy may capture the ability to drive in a wide road. Each proxy relies on an underlying abstraction that captures one aspect of the true objective of the task. By combining these proxies, the practitioner can design a policy that is robust to the different aspects of the true objective of the task.

This motivates an objective of Aligned-PAC learning, which we define as follows. Let $\mathcal{T}$ be a distribution over proxies, where each proxy is a triple $(\ell, \mathcal{D}, \tau)$ consisting of a loss function $\ell : \mathcal{H} \times \mathcal{X} \to \mathbb{R}$, a distribution $\mathcal{D}$ over $\mathcal{X}$, and an acceptable-loss threshold $\tau \in \mathbb{R}$. Different proxies represent different requirements of the intended task. The goal is to find a hypothesis $h \in \mathcal{H}$ that minimizes 
\begin{equation} \label{eq:aligned-pac}
\mathbb{P}_{(\ell, \mathcal{D}, \tau) \sim \mathcal{T}} \left[\E_{x \sim \mathcal{D}} [\ell(h, x)] > \tau\right] ~,
\end{equation}
This objective measures the probability that the hypothesis violates a requirement sampled from $\mathcal{T}$. Once a requirement is satisfied, further reducing its loss yields no additional benefit, preventing improvements on already satisfied requirements from compensating for failures on others.

In the driving example, one proxy can measure imitation loss under ordinary driving conditions, while another measures collision probability under a distribution concentrated on hazardous situations. Each proxy has its own acceptable threshold. Once the imitation requirement is satisfied, more accurate prediction of routine human trajectories cannot compensate for failing the hazard-avoidance requirement. Moreover, the hazard proxy can receive substantial probability mass under the proxy distribution, regardless of how rarely such situations occur in ordinary driving.

More generally, for a function $L : \mathbb{R}^2 \to \mathbb{R}$, we can define the Aligned-PAC objective as
\begin{equation} \label{eq:aligned-pac-general}
\E_{(\ell, \mathcal{D}, \tau) \sim \mathcal{T}} L\left(\E_{x \sim \mathcal{D}} [\ell(h, x)], \tau\right) ~.
\end{equation}
This encompasses Equation \ref{eq:aligned-pac} by setting $L(\epsilon, \tau) = 1[\epsilon > \tau]$.

Equation \ref{eq:aligned-pac-general} is a generalization of the PAC learning objective. Indeed, ordinary PAC learning is recovered when $\mathcal{T}$ is a point-mass distribution on a single proxy and $L(\epsilon, \tau) = \epsilon$. The generalization of PAC to a distribution over proxies allows us to capture the different aspects of the true objective of the task. It allows us to tackle the alignment problem through the lens of \emph{generalization}. 

Aligned-PAC offers two advantages for alignment. First, in practice, it is much easier to express our true intention through a large collection of specific requirements than to encode it in a single proxy. Each proxy can focus on one aspect of the desired behavior and the situations in which it matters, without having to capture the entire objective. Second, generalization across proxies provides a defense against imperfections in the training proxy, whether these are actively exploited through reward hacking or simply reflect an incomplete or mistaken specification. Success on independently sampled, unseen proxies requires behavior that extends beyond the particular criteria used for training. A train/validation split at the level of proxies therefore assesses whether the learner has captured requirements that the training proxy left unspecified. Good validation performance indicates that the learned behavior generalizes across the aspects of our intention represented by the proxy family.

To specify how the learner accesses the objective, recall that in PAC learning it receives $m$ i.i.d. examples from $\mathcal{D}$. The sample complexity describes how large $m$ must be to achieve a desired accuracy $\epsilon$ with confidence $1-\delta$, as a function of the relevant complexity measures of the hypothesis class and the learning algorithm.

In Aligned-PAC, sampling occurs at two levels. We first draw $n$ proxies $(\ell_i, \mathcal{D}_i, \tau_i)$ independently from $\mathcal{T}$. The learner receives the loss function $\ell_i$ and threshold $\tau_i$ for each proxy, together with sampling access to $\mathcal{D}_i$. At each step, it may choose a proxy based on its observations so far and obtain a fresh example from that proxy's distribution. Conditional on the sampled proxies, the sampling history, and the selected proxy index $i$, the next example is a fresh draw from $\mathcal{D}_i$. The learner may allocate at most $m$ examples in total across the proxies. Writing $m_i$ for the number of examples drawn from $\mathcal{D}_i$, we require $\sum_{i=1}^n m_i \leq m$; the allocation may be adaptive. Based on these observations, the learner outputs a hypothesis $h \in \mathcal{H}$ (or an ensemble of such hypotheses). Equal allocation is a special case suitable for the initial analysis: the learner draws $\lfloor m/n \rfloor$ examples from each proxy's example distribution.

This protocol gives rise to two notions of complexity: the \emph{sample complexity}, describing the required total example budget $m$, and the \emph{proxy complexity}, describing the required number $n$ of sampled proxies. We can study how both quantities must grow with the accuracy $\epsilon$, the confidence parameter $\delta$, and the relevant complexity measures of the hypothesis class and the learning algorithm, including its sampling strategy. These requirements may depend on one another, so the learning guarantee is naturally expressed in terms of pairs $(n,m)$. Increasing $m$ allows the learner to gather more information about the observed requirements, while increasing $n$ exposes it to more requirements from the proxy distribution.

\section{Validating Alignment}
\label{sec:validation}

Suppose we already have a hypothesis $h$, trained on a known collection of proxies. We want to determine whether its success on those requirements extends to the population of requirements. We first use the smooth loss
\[
L(u,\tau)=[u-\tau]_+^2,\qquad [a]_+=\max\{a,0\}.
\]
This loss is zero when the threshold criterion is satisfied and penalizes violations quadratically. Its derivative with respect to $u$ is $2$-Lipschitz, which makes it possible to control the effect of estimating each proxy's expected loss. We assume that all inner losses and thresholds lie in $[0,1]$.

For a proxy $t=(\ell_t,\mathcal{D}_t,\tau_t)$ and a distribution $\mathcal Q$ over proxies, write
\[
r_t(h)=\E_{x\sim\mathcal D_t}[\ell_t(h,x)],
\qquad
R(h;\mathcal Q)=\E_{t\sim\mathcal Q}[r_t(h)-\tau_t]_+^2.
\]
Let $\mathcal T_{tr}$ be the uniform distribution over the sequence of training proxies, counting repetitions.
\begin{definition}[Mis-alignment gap]
The mis-alignment gap of $h$ is
\[
G(h)=R(h;\mathcal T)-R(h;\mathcal T_{tr}).
\]
\end{definition}
A large positive gap means that success on the training requirements overstates performance on the population of requirements. A small gap together with small training-proxy risk certifies small population risk; a small gap alone does not.

We fix $h$ before evaluation. We may draw fresh examples from the example distribution of any training proxy, and we may independently sample fresh proxies from $\mathcal T$ and examples from their distributions. To sample from $\mathcal T_{tr}$, we simply select a training-proxy index uniformly with replacement; even when an index is selected repeatedly, its evaluation examples are fresh. Thus, estimating the average training-proxy risk does not require evaluating every training proxy individually.

We ask three questions: how can we estimate $G(h)$, how many fresh examples are needed, and how many fresh proxies are needed? Specifically, we seek an estimate $\widehat G$ satisfying
\begin{equation}\label{eq:gap-accuracy}
\mathbb P\bigl(|\widehat G-G(h)|\leq\epsilon\bigr)\geq1-\delta.
\end{equation}
The \emph{sample complexity} counts all fresh examples, including examples from training proxies. The \emph{proxy complexity} counts fresh draws from $\mathcal T$; selecting an existing training-proxy index does not count as a fresh proxy. All guarantees below hold conditional on the training process, for any returned $h$, and require no complexity bound on the hypothesis class.

\subsection{A simple estimator with equal allocation}

We first estimate each risk separately. For $\mathcal Q\in\{\mathcal T,\mathcal T_{tr}\}$, sample $N$ proxies independently from $\mathcal Q$, draw $k$ fresh examples per proxy, and compute
\[
\widehat R_{N,k}(h;\mathcal Q)
=\frac1N\sum_{i=1}^N
\left[\frac1k\sum_{j=1}^k\ell_{t_i}(h,x_{i,j})-\tau_{t_i}\right]_+^2.
\]
Subtract the estimate for $\mathcal T_{tr}$ from the estimate for $\mathcal T$. This construction uses $N$ fresh population proxies and $2Nk$ fresh examples in total.

\begin{theorem}[Equal-allocation validation]\label{thm:equal-validation}
For $0<\epsilon,\delta<1$, take
\[
N=\left\lceil\frac{8\log(4/\delta)}{\epsilon^2}\right\rceil,
\qquad k=\left\lceil\frac1\epsilon\right\rceil.
\]
The estimator above satisfies Equation~\ref{eq:gap-accuracy}. Its sample complexity is $O(\epsilon^{-3}\log(1/\delta))$ and its proxy complexity is $O(\epsilon^{-2}\log(1/\delta))$ for $\delta\leq1/4$.
\end{theorem}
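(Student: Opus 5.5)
The plan is to split the error of each risk estimate into a bias part, coming from using $k$ examples per proxy, and a fluctuation part, coming from using $N$ proxies. We then apply the triangle inequality to the difference of the two estimates. Fix $\mathcal Q\in\{\mathcal T,\mathcal T_{tr}\}$. For a sampled proxy $t$ let $\hat r_t=\frac1k\sum_j\ell_t(h,x_j)$, so that $\E[\hat r_t\mid t]=r_t(h)$. Define $\phi(u)=[u-\tau_t]_+^2$. The summand $Z_i=\phi(\hat r_{t_i})$ lies in $[0,1]$, since $u,\tau\in[0,1]$, and the $Z_i$ are i.i.d. Hence
\[
\widehat R_{N,k}(h;\mathcal Q)-R(h;\mathcal Q)=\Bigl(\tfrac1N\textstyle\sum_i Z_i-\E Z_1\Bigr)+\bigl(\E Z_1-R(h;\mathcal Q)\bigr).
\]

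\textbf{Bias step.} Since $\phi$ is convex with a $2$-Lipschitz derivative, a second-order expansion around $r_t$ gives, conditionally on $t$,
\[
0\le \E[\phi(\hat r_t)\mid t]-\phi(r_t)\le \operatorname{Var}(\hat r_t\mid t)\le \frac{1}{4k}.
\]
The first-order term vanishes because $\hat r_t$ is unbiased, and the variance bound holds because the losses lie in $[0,1]$. Averaging over $t\sim\mathcal Q$ gives $0\le \E Z_1-R(h;\mathcal Q)\le 1/(4k)\le\epsilon/4$ when $k=\lceil 1/\epsilon\rceil$. Both risks are biased upward, each by between $0$ and $\epsilon/4$, so the bias of the difference is at most $\epsilon/4$ in absolute value.

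\textbf{Concentration step.} Hoeffding's inequality for the i.i.d. $[0,1]$-valued $Z_i$ gives
\[
\mathbb P\Bigl(\bigl|\tfrac1N\textstyle\sum_i Z_i-\E Z_1\bigr|>s\Bigr)\le 2e^{-2Ns^2}.
\]
Choose $s=3\epsilon/8$. With $N\ge 8\log(4/\delta)/\epsilon^2$ we have $2Ns^2\ge \tfrac{9}{4}\log(4/\delta)\ge\log(4/\delta)$, so each tail probability is at most $\delta/2$. A union bound over the two estimates then shows that, with probability at least $1-\delta$, the total error is at most
\[
2s+\epsilon/4=\tfrac34\epsilon+\tfrac14\epsilon=\epsilon.
\]
There is slack here; one could also take $s=\epsilon/2$ and use only the one-sided bias.

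\textbf{Counting.} The estimator uses $2Nk=O(\epsilon^{-2}\log(1/\delta)\cdot\epsilon^{-1})$ examples and $N=O(\epsilon^{-2}\log(1/\delta))$ fresh proxies. The restriction $\delta\le 1/4$ lets us absorb the ceilings and constants, for example $\log(4/\delta)\le 2\log(1/\delta)$, into the $O(\cdot)$.

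The main obstacle is the bias bound. It requires the two-sided Taylor control for $[\cdot]_+^2$, which is $C^1$ but not $C^2$. I would handle it with the inequality
\[
\phi(v)\le\phi(u)+\phi'(u)(v-u)+(v-u)^2,
\]
which follows from integrating the $2$-Lipschitz bound on $\phi'$. The lower bound $\phi(v)\ge\phi(u)+\phi'(u)(v-u)$ follows from convexity. Everything else is standard.
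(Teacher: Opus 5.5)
Your proof is correct and follows the paper's argument: Hoeffding on the $[0,1]$-valued proxy scores, the smoothness bias bound $0\le \E Z_1-R(h;\mathcal Q)\le 1/(4k)$, and a union bound over the two risks. The only cosmetic difference is that the paper allots accuracy $\epsilon/2$ to each risk, using $\sqrt{\log(4/\delta)/(2N)}\le\epsilon/4$, rather than exploiting that both biases have the same sign. One small slip is in your aside: $s=\epsilon/2$ would give $2s+\epsilon/4=5\epsilon/4$, so the slack actually runs the other way and permits $s=\epsilon/4$.
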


The proof, given in Appendix~\ref{app:equal-validation}, separates variation across sampled proxies from bias within each proxy. Averaging $N$ bounded proxy scores contributes error of order $N^{-1/2}$. Smoothness bounds the bias from using $k$ examples by $1/(4k)$. Hence, order $\epsilon^{-2}$ proxies and $\epsilon^{-1}$ examples per proxy suffice, giving the cubic total budget.

This bias cannot be removed merely by sharpening concentration. If a proxy has Bernoulli losses with mean $1/2$ and threshold $1/2$, its true squared violation is zero, but its empirical squared violation has expectation exactly $1/(8k)$. Averaging more such scores concentrates around a biased value. These biases need not cancel between the population and training-proxy distributions. A better sample bound therefore calls for a different estimator.

\subsection{A multilevel estimator with quadratic sample complexity}

The equal-allocation estimator assesses every sampled proxy at the same resolution. Instead, we can estimate a coarse risk using many inexpensive assessments and then estimate corrections using fewer, more expensive assessments. We use an antithetic multilevel Monte Carlo construction for nested expectations~\cite{giles2018nested}. The sample sizes and the number of proxies at each level are fixed in advance: the method uses unequal allocation, but does not require adaptive sampling.

We describe one estimate of $R(h;\mathcal Q)$, where $\mathcal Q$ is either the population proxy distribution $\mathcal T$ or the uniform distribution $\mathcal T_{tr}$ over training-proxy indices. Throughout, $h$ is fixed and $L(u,\tau)=[u-\tau]_+^2$. A level specifies the number of examples used to assess a proxy: level $j$ uses $2^j$ examples.

To generate one correction at level $j\geq1$, sample $t\sim\mathcal Q$ and draw $2^j$ fresh examples independently from $\mathcal D_t$. Let $A_j$ and $B_j$ be the average inner losses in the first and second halves of this sample, respectively. Thus, each half contains $2^{j-1}$ examples, and $(A_j+B_j)/2$ is the average over all $2^j$ examples. Define the correction
\begin{equation}\label{eq:multilevel-correction}
\Delta_j
=L\left(\frac{A_j+B_j}{2},\tau_t\right)
-\frac{L(A_j,\tau_t)+L(B_j,\tau_t)}2.
\end{equation}
The expectation of $\Delta_j$ is the difference between the expected empirical scores using $2^j$ and $2^{j-1}$ examples per proxy. At level zero, use one example and set $\Delta_0=L(\ell_t(h,x),\tau_t)$. Consequently, the expectations telescope: summing them through level $J$ gives the expected empirical score using $2^J$ examples per proxy.

Choose a largest level $J$ and a number $N_j$ of correction replicates at each level. For every $j\in\{0,\ldots,J\}$ and $a\in\{1,\ldots,N_j\}$, let $\Delta_j^{(a)}$ denote an independent copy of the level-$j$ correction just defined: draw a new proxy from $\mathcal Q$ and new examples for that copy. All copies are independent across both $a$ and $j$, conditional on training; within a copy, the full-sample and half-sample scores deliberately share the same examples. Average the copies at each level and add the resulting estimates:
\[
Z_{\mathcal Q}=\sum_{j=0}^J\frac1{N_j}
\sum_{a=1}^{N_j}\Delta_j^{(a)},
\]
The multilevel estimator is generally biased as well. Since its expectation is the expected empirical score based on $2^J$ examples per proxy, a single run satisfies
\[
0\leq\E Z_{\mathcal Q}-R(h;\mathcal Q)\leq\frac1{4\cdot2^J}.
\]
This is the same bias bound as for the equal-allocation risk estimator with $k=2^J$. The advantage is that multilevel estimation controls this bias with fewer total examples: only the fine-level corrections use $2^J$ examples per proxy, while most assessments use much smaller samples. Taking medians provides confidence amplification, not unbiasedness; the final guarantee controls total estimation error.

Thus, one run uses $\sum_{j=0}^J N_j$ proxy draws and $\sum_{j=0}^J N_j2^j$ examples. Choose $2^J$ of order $\epsilon^{-1}$ and $N_j$ proportional to $\epsilon^{-2}2^{-3j/2}$, rounded up. To obtain high confidence, repeat the entire construction independently and take the median of the resulting values of $Z_{\mathcal Q}$. Apply this procedure separately to $\mathcal T$ and $\mathcal T_{tr}$, with accuracy $\epsilon/2$ and failure probability $\delta/2$ for each risk, and subtract the two medians. Appendix~\ref{app:multilevel-validation} gives the exact allocations and number of repetitions.

The reason this saves samples is cancellation. The two half-sample means average to the full-sample mean, so their first-order errors cancel in Equation~\ref{eq:multilevel-correction}. Smoothness makes the correction quadratic in $A_j-B_j$. Its variance is therefore $O(2^{-2j})$, while its sample cost is only $2^j$. The fine corrections can be estimated with progressively fewer proxies, and the resulting total sample cost is a convergent geometric sum times $\epsilon^{-2}$.

\begin{theorem}[Multilevel validation]\label{thm:multilevel-validation}
There is a nonadaptive estimator satisfying Equation~\ref{eq:gap-accuracy} with both sample complexity and proxy complexity
\[
O\!\left(\epsilon^{-2}\log(1/\delta)\right)
\]
for $0<\epsilon\leq1/2$ and $0<\delta\leq1/4$. These are deterministic budget bounds, with universal implied constants. The budgets include estimating both risks; only draws from $\mathcal T$ count toward fresh-proxy complexity.
\end{theorem}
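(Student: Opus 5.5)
We analyze a single run of the multilevel estimator $Z_{\mathcal Q}$ for a fixed $\mathcal Q\in\{\mathcal T,\mathcal T_{tr}\}$. We then amplify confidence with a median, and finally combine the two risks by a union bound. Set $2^J=\lceil 1/\epsilon\rceil$ rounded up to a power of two, so that $2^J\le 4/\epsilon$ and the bias bound stated before the theorem gives $0\le \E Z_{\mathcal Q}-R(h;\mathcal Q)\le 1/(4\cdot 2^J)\le \epsilon/4$. To get accuracy $\epsilon/2$ for each risk, it then suffices to show that $Z_{\mathcal Q}$ lies within $\epsilon/4$ of its mean with probability at least $3/4$. By Chebyshev, this holds once $\mathrm{Var}(Z_{\mathcal Q})\le \epsilon^2/64$. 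Since the copies are independent across levels and replicates, $\mathrm{Var}(Z_{\mathcal Q})=\sum_j V_j/N_j$, where $V_j=\mathrm{Var}(\Delta_j)\le \E\Delta_j^2$.

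The key step is the variance bound $\E\Delta_j^2\le C\,2^{-2j}$. Write $L(u)=[u-\tau_t]_+^2$. It is convex, and its derivative $L'$ is $2$-Lipschitz. With $M=(A_j+B_j)/2$ and $D=(A_j-B_j)/2$, Taylor's theorem with integral remainder applied at $M\pm D$ shows that the first-order terms cancel. This is the antithetic cancellation: $A_j=M+D$ and $B_j=M-D$. We obtain
\[
0\le -\Delta_j=\tfrac12\bigl(L(M+D)+L(M-D)\bigr)-L(M)\le D^2 .
\]
Hence $\E\Delta_j^2\le \E D^4$. Conditional on $t$, $D$ is half the difference of two independent averages of $2^{j-1}$ i.i.d.\ $[0,1]$-valued variables. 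So $D$ is an average of $2^{j}$ independent centered terms of the form $\pm(\ell-\mu)/2$, each bounded by $1/2$, and the standard fourth-moment bound for bounded i.i.d.\ sums gives $\E D^4\le 3\cdot 2^{-2j}/16\cdot(\text{const})$. For level $0$, simply use $V_0\le 1$. The main obstacle is getting this fourth-moment estimate with explicit constants and checking that it holds uniformly in $t$. Integrating over $t\sim\mathcal Q$ then preserves the bound, because the bound does not depend on $t$.

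With $V_j\le C2^{-2j}$, choose $N_j=\lceil c\,\epsilon^{-2}2^{-3j/2}\rceil$. Then $\sum_j V_j/N_j\le (C/c)\,\epsilon^2\sum_j 2^{-j/2}$, which is at most $\epsilon^2/64$ for a suitable universal constant $c$. The proxy count is $\sum_{j\le J}N_j\le c\epsilon^{-2}\sum_j 2^{-3j/2}+J+1=O(\epsilon^{-2})$, since $J=O(\log(1/\epsilon))\le O(\epsilon^{-2})$. The example count is $\sum_j N_j2^j\le c\epsilon^{-2}\sum_j 2^{-j/2}+2^{J+1}=O(\epsilon^{-2})$. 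The geometric series converges, and the rounding overhead $2^{J+1}=O(\epsilon^{-1})$ is absorbed because $\epsilon\le 1/2$.

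For confidence, repeat the run $r=\lceil 8\log(2/\delta)\rceil$ times, an odd number if needed, and take the median. Each run is $\epsilon/2$-accurate with probability at least $3/4$. By Hoeffding, the fraction of bad runs exceeds $1/2$ with probability at most $e^{-r/8}\le\delta/2$; this is where $\delta\le 1/4$ is used to keep the constants clean. Applying this to both $\mathcal T$ and $\mathcal T_{tr}$ and taking a union bound, the difference of the medians satisfies $|\widehat G-G(h)|\le\epsilon$ with probability at least $1-\delta$. The total budgets are $O(\epsilon^{-2}\log(1/\delta))$ examples and $O(\epsilon^{-2}\log(1/\delta))$ draws. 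All allocations are fixed in advance, so the bounds are deterministic. Only the draws made for $\mathcal T$ count as fresh proxies, and they are bounded by the same expression.
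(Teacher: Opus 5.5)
Your proposal is correct and follows essentially the same route as the paper's proof. The shared steps are the bias bound $1/(4\cdot 2^J)$, the antithetic estimate $|\Delta_j|\le\frac14(A_j-B_j)^2$ combined with a fourth-moment bound giving $\E\Delta_j^2=O(2^{-2j})$, the allocation $N_j\propto\epsilon^{-2}2^{-3j/2}$, Chebyshev for one run, a median over about $8\log(2/\delta)$ independent runs, and a union bound over the two risks. You only split the per-risk accuracy between bias and deviation differently, and there is one harmless slip: $D=(A_j-B_j)/2$ is the average of the $2^j$ signed terms $\pm(X_b-\mu)$, which are bounded by $1$, not of terms $\pm(X_b-\mu)/2$ bounded by $1/2$, so this affects only the unspecified universal constant $C$.
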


Appendix~\ref{app:multilevel-validation} specifies the estimator and proves the theorem. The improvement is in the total example budget: the proxy complexity retains the quadratic rate. Although individual corrections can be negative, their telescoping sum estimates the same nonnegative squared-violation risk. The construction therefore changes the estimator, not the objective.

\subsection{Matching lower bounds}

The quadratic dependence is unavoidable in general. Even with a single hypothesis and two proxy types, estimating the gap can require distinguishing two Bernoulli means separated by order $\epsilon$. Conversely, even if the score of each sampled proxy is revealed exactly, estimating its population average can require order $\epsilon^{-2}$ independent proxies.

\begin{theorem}[Validation lower bounds]\label{thm:validation-lower}
There are universal constants $c,\epsilon_0>0$ such that, for $0<\epsilon\leq\epsilon_0$ and $0<\delta\leq1/4$, any estimator guaranteeing Equation~\ref{eq:gap-accuracy} uniformly over the bounded-loss setting must, in the worst case, use at least
\[
c\epsilon^{-2}\log(1/\delta)
\]
fresh examples. There is also a family of instances requiring at least this order of fresh population proxies, even if their exact scores are supplied without example sampling. The lower bounds allow adaptive evaluation and apply conditional on a training outcome providing no information about the unknown parameter.
\end{theorem}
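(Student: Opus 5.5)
Both lower bounds follow from Le Cam's two-point method combined with the standard Bernoulli testing lower bound: distinguishing $\mathrm{Ber}(p)$ from $\mathrm{Ber}(p')$ with $|p-p'|=\Theta(\epsilon)$ and error at most $\delta$ requires $\Omega(\epsilon^{-2}\log(1/\delta))$ samples, even adaptively. We will build two instances with gaps differing by more than $2\epsilon$, so that any estimator achieving Equation~\ref{eq:gap-accuracy} yields a test with error at most $\delta$. The training outcome is fixed and independent of the unknown parameter, so conditioning on it changes nothing.

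\textbf{Examples lower bound.} Take a single hypothesis $h$. Let $\mathcal T$ and the training multiset both concentrate on proxy types in a way that makes the gap depend on one unknown Bernoulli mean. A simple choice is a single training proxy $t_0$ with $\tau=0$ and Bernoulli losses of mean $p\in\{p_0,p_0+c'\epsilon\}$, say $p_0=1/2$. Then $R(h;\mathcal T_{tr})=p^2$. Let $\mathcal T$ be a point mass on a known proxy with risk $0$ (threshold $1$). Then $G(h)=-p^2$, and the two hypotheses give gaps differing by at least $(1/2)\cdot c'\epsilon\cdot 2>2\epsilon$ once $c'\ge 2$ (adjusting $\epsilon_0$ so that means stay in $[0,1]$). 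Every example the estimator draws from $\mathcal D_{t_0}$ is a $\mathrm{Ber}(p)$ sample. Fresh population proxies and their examples are uninformative because their law does not depend on $p$, and they can be simulated by the estimator itself. The estimator therefore yields an adaptive sequential test between two Bernoullis whose means differ by $\Theta(\epsilon)$ and stay bounded away from $0$ and $1$. The KL-divergence bound (Bretagnolle--Huber, $\mathrm{KL}=\Theta(\epsilon^2)$ per sample) together with Wald's identity for adaptive stopping gives expected, hence worst-case, sample count at least $c\epsilon^{-2}\log(1/\delta)$, using $\delta\le1/4$.

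\textbf{Proxies lower bound.} Now let examples be irrelevant: make each proxy's loss deterministic, so a single example, or the oracle, reveals its exact score. Let $\mathcal T$ mix two proxy types, one with score $[r-\tau]_+^2=1$ and one with score $0$, with mixing weight $q\in\{1/2,1/2+3\epsilon\}$. Fix the training set, for instance to one proxy of the second type. The gaps then differ by $3\epsilon>2\epsilon$. Each fresh population proxy reveals only its type, which is a $\mathrm{Ber}(q)$ draw, and training-proxy draws carry no information about $q$. The same Bernoulli testing bound therefore forces $\Omega(\epsilon^{-2}\log(1/\delta))$ fresh draws from $\mathcal T$, even with exact scores supplied.

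The main technical point is the high-probability form of the Bernoulli lower bound under adaptivity, namely getting the $\log(1/\delta)$ factor rather than only a constant. I would handle it via the change-of-measure inequality $\mathrm{KL}(P_0^{\tau},P_1^{\tau})\ge \mathrm{kl}(\delta,1-\delta)\ge \tfrac12\log(1/(4\delta))$, combined with $\mathrm{KL}(P_0^{\tau},P_1^{\tau})=\E_0[\text{\#samples}]\cdot \mathrm{KL}(\mathrm{Ber}(p_0)\|\mathrm{Ber}(p_1))$. Since the bound holds in expectation, some realization meets the deterministic worst-case budget claim. The remaining work is routine checking of the constants and of $\epsilon_0$.
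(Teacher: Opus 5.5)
Your overall strategy matches the paper's: a two-point reduction to testing, $\operatorname{kl}(\delta,1-\delta)$ via data processing, and the chain rule to handle adaptive allocation. Your proxy lower bound is exactly the paper's construction. The problem is the hard instance for the example bound.

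In the framework, training proxies are i.i.d.\ draws from $\mathcal T$, and the theorem conditions on an outcome of that training process. With $\mathcal T$ a point mass on a known threshold-one proxy, a training sequence containing your Bernoulli proxy $t_0$ has probability zero. Your two instances are therefore not ones on which the estimator is required to satisfy Equation~\ref{eq:gap-accuracy}, and the reduction yields nothing.

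Placing the unknown mean in a training proxy also makes your claim that the training outcome is independent of $p$ non-automatic. The learner had sampling access to $\mathcal D_{t_0}$, and any training examples from it are $\operatorname{Bern}(p)$ draws that an evaluator could reuse instead of fresh ones. You would additionally have to condition on a training run that drew none.

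The paper avoids both issues by putting the unknown mean only in a population proxy that is absent from the training set. Let $\mathcal T$ put mass $1/2$ on a zero-loss, threshold-zero proxy $t_0$ and mass $1/2$ on a threshold-zero proxy $t_1$ with $\operatorname{Bern}(p)$ losses. Condition on the positive-probability event that every training proxy is $t_0$. Then:
\begin{itemize}
\item $R(h;\mathcal T_{tr})=0$ regardless of what training observed, and $G(h)=\tfrac12p^2$;
\item $p_0=1/2$ and $p_1=1/2+6\epsilon$ give separation $3\epsilon+18\epsilon^2>2\epsilon$;
\item only examples from $t_1$ are informative, each contributing $O(\epsilon^2)$ relative entropy.
\end{itemize}
Your version can also be repaired by letting $\mathcal T$ mix $t_0$ with the known proxy. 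That gives $G(h)=-\tfrac12p^2$ and requires $c'\geq4$, but you still need the extra conditioning on a training run without $t_0$ examples.

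Two smaller points. First, the inequality $\operatorname{kl}(\delta,1-\delta)\geq\tfrac12\log(1/(4\delta))$ is vacuous at $\delta=1/4$, which the theorem includes; the same holds for the Bretagnolle--Huber form $\log(1/(4\delta))$. Use instead $(1-2\delta)\log\frac{1-\delta}{\delta}\geq\tfrac14\log(1/\delta)$, valid for $\delta\leq1/4$ because $1-2\delta\geq1/2$ and $(1-\delta)^2\geq\delta$.

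Second, the reduction needs the two gaps separated by strictly more than $2\epsilon$. Your bound $p_1^2-p_0^2\geq c'\epsilon$ gives only $2\epsilon$ at $c'=2$. The exact value $c'\epsilon(1+c'\epsilon)$ suffices, or take $c'=3$.

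With these repairs, your Wald-identity argument gives a slightly stronger conclusion than needed: a bound on the expected number of informative examples under $P_0$, which implies the paper's deterministic-budget statement.
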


Appendix~\ref{app:validation-lower} proves the two claims by reduction to binary hypothesis testing. Thus, under our sampling-access model, the multilevel estimator has optimal worst-case sample and proxy complexity, up to universal constants. These guarantees concern the fixed hypothesis under the specified proxy distribution; using validation to modify or select the hypothesis requires further independent evaluation or an analysis accounting for that selection.

\subsection{Margin-based validation}
\label{sec:margin-validation}

A complementary goal is to certify that only a small probability mass of requirements is violated by more than a prescribed margin. For this purpose, return to the pass/fail loss and its margin version:
\[
L(u,\tau)=\mathbf{1}[u>\tau],
\qquad L_\gamma(u,\tau)=\mathbf{1}[u>\tau+\gamma],
\]
where $\gamma\in(0,1)$. Define
\[
R_\gamma(h;\mathcal Q)
=\E_{t\sim\mathcal Q}L_\gamma(r_t(h),\tau_t)
=\mathbb P_{t\sim\mathcal Q}\bigl(r_t(h)>\tau_t+\gamma\bigr).
\]
This is a probability of margin violations, rather than the squared-violation risk studied above.

Fix $h$ independently of evaluation. Draw $n_v$ proxies independently from $\mathcal Q$ and $k_v$ fresh examples from each proxy's example distribution, independently conditional on the proxies. Accept $h$ only if every empirical proxy passes:
\[
\widehat r_{t_i}(h)
=\frac1{k_v}\sum_{j=1}^{k_v}\ell_{t_i}(h,x_{i,j})
\leq\tau_{t_i},\qquad i=1,\ldots,n_v.
\]
The intuition is that a proxy whose true loss exceeds its threshold by more than $\gamma$ is detected with constant probability using only $O(\gamma^{-2})$ examples. If more than an $\epsilon$ fraction of requirements have such violations, each independent evaluation has probability at least a constant times $\epsilon$ of detecting one. The probability that all evaluations pass therefore decays exponentially in $n_v\epsilon$.

\begin{theorem}[Margin-based validation]\label{thm:margin-validation}
Assume inner losses lie in $[0,1]$. For any fixed $h$ with $R_\gamma(h;\mathcal Q)>\epsilon$, the probability that all $n_v$ empirical proxies pass is at most
\[
\exp\!\left(-n_v\epsilon\bigl(1-e^{-2k_v\gamma^2}\bigr)\right).
\]
In particular, for $\epsilon,\gamma,\delta\in(0,1)$, take
\[
k_v=\left\lceil\frac{\log 2}{2\gamma^2}\right\rceil,
\qquad
n_v=\left\lceil\frac{2\log(1/\delta)}{\epsilon}\right\rceil.
\]
With probability at least $1-\delta$, either an empirical proxy fails or $R_\gamma(h;\mathcal Q)\leq\epsilon$. For $\delta\leq1/4$, the procedure uses
\[
n_v=O\!\left(\frac{\log(1/\delta)}{\epsilon}\right)
\quad\text{proxy draws},\qquad
m_v=n_vk_v=O\!\left(\frac{\log(1/\delta)}{\epsilon\gamma^2}\right)
\quad\text{fresh examples}.
\]
\end{theorem}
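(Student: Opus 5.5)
The plan is to reduce to a single proxy draw, using independence across the $n_v$ evaluations, and then apply Hoeffding's inequality inside each proxy. Fix $h$ and call a proxy $t$ \emph{badly violated} if $r_t(h)>\tau_t+\gamma$. By assumption, the event $B=\{r_t(h)>\tau_t+\gamma\}$ has probability $p:=R_\gamma(h;\mathcal Q)>\epsilon$ under $t\sim\mathcal Q$.

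First, for one evaluation, I bound the probability that the empirical proxy passes. Condition on $t$. If $t\in B$, the $k_v$ fresh losses are i.i.d. in $[0,1]$ with mean $r_t(h)$. Passing means $\widehat r_t(h)\le\tau_t<r_t(h)-\gamma$, so the one-sided Hoeffding inequality gives
\[
\mathbb P\bigl(\widehat r_t(h)\le\tau_t\mid t\bigr)\le e^{-2k_v\gamma^2}.
\]
If $t\notin B$, I bound the pass probability trivially by $1$. Averaging over $t$,
\[
\mathbb P(\text{pass})\le (1-p)+p\,e^{-2k_v\gamma^2}=1-p\bigl(1-e^{-2k_v\gamma^2}\bigr)\le 1-\epsilon\bigl(1-e^{-2k_v\gamma^2}\bigr).
\]

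Second, I combine the evaluations. Because $h$ is fixed independently of evaluation and the pairs (proxy, examples) are i.i.d. across $i$, the $n_v$ pass events are independent. The probability that all of them pass is therefore at most $\bigl(1-\epsilon(1-e^{-2k_v\gamma^2})\bigr)^{n_v}$. Applying $1-x\le e^{-x}$ gives the stated bound. The only point needing care is to state the conditional independence explicitly, so that Hoeffding applies given $t$ and the product over $i$ is justified.

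Third, I plug in the parameters. With $k_v\ge \log 2/(2\gamma^2)$ we have $e^{-2k_v\gamma^2}\le 1/2$, so the bound becomes $\exp(-n_v\epsilon/2)$. With $n_v\ge 2\log(1/\delta)/\epsilon$ this is at most $\delta$. Consequently, if $R_\gamma(h;\mathcal Q)>\epsilon$, then acceptance occurs with probability at most $\delta$. This is exactly the statement that, with probability at least $1-\delta$, either some empirical proxy fails or $R_\gamma\le\epsilon$.

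For the complexity bounds, $\delta\le1/4$ gives $\log(1/\delta)\ge\log 4$, so the ceiling in $n_v$ adds at most a constant factor; hence $n_v=O(\log(1/\delta)/\epsilon)$. Since $\gamma<1$, the term $\log 2/(2\gamma^2)$ is bounded below by a constant, so $k_v=O(\gamma^{-2})$, and the product gives $m_v=n_vk_v=O(\log(1/\delta)/(\epsilon\gamma^2))$.

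No step is a serious obstacle. The most delicate one is the first: the correct event for Hoeffding is a deviation of size at least $r_t-\tau_t>\gamma$ below the mean, applied conditionally on $t$.
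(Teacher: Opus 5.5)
Your proposal is correct and follows the paper's proof essentially step for step. Both use conditional one-sided Hoeffding on proxies with $r_t(h)>\tau_t+\gamma$, average over $t\sim\mathcal Q$ to bound the per-evaluation pass probability by $1-\epsilon(1-e^{-2k_v\gamma^2})$, then apply independence across the $n_v$ evaluations with $1-x\le e^{-x}$ and substitute the stated $k_v,n_v$. Your handling of the ceilings in the complexity bounds is slightly more explicit than the paper's one-line budget remark.
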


The proof is in Appendix~\ref{app:margin-validation}. It does not require every individual proxy to be evaluated correctly: constant probability of detecting each margin violation suffices, avoiding a union bound over inner estimation errors. Taking $\mathcal Q=\mathcal T$ gives a population certificate and the stated fresh-proxy complexity. Taking $\mathcal Q=\mathcal T_{tr}$ uses only existing training-proxy indices and fresh examples.

The certificate controls false acceptance: a hypothesis with $R_\gamma(h;\mathcal Q)>\epsilon$ passes all evaluations with probability at most $\delta$. It does not assert a posterior probability conditional on acceptance, nor does it guarantee acceptance of every hypothesis with small margin risk. In particular, a proxy whose true loss equals its threshold can fail empirically because of sampling noise. The linear dependence on $1/\epsilon$ is consistent with the preceding lower bounds: this procedure certifies a small margin-violation probability when all proxies pass, rather than estimating an arbitrary mis-alignment gap to additive accuracy $\epsilon$.

\section{Optimizing the Aligned-PAC Objective via Proxy-Boosting}
\label{sec:proxy-boosting}

We now turn to optimizing the Aligned-PAC objective
\[
\E_{(\ell,\mathcal D,\tau)\sim\mathcal T}
L\left(\E_{x\sim\mathcal D}[\ell(h,x)],\tau\right).
\]
Ignoring $L$ and minimizing the average inner loss does not generally solve this problem. It allows improvements on already satisfied requirements to compensate for violations of other requirements, even when the thresholds specify that the former improvements have no value. Direct stochastic gradient descent on the stated objective also presents difficulties. For the pass/fail loss, the outer function is discontinuous and its derivative is zero away from the threshold. Even for a smooth outer loss, inserting a minibatch estimate into the nonlinear function generally yields a biased gradient estimator. These difficulties motivate an approach that repeatedly solves ordinary weighted learning problems while changing the emphasis placed on the proxies.

The rationale is to use boosting as an outer loop around standard deep learning optimization. The outer loop adjusts the weights of the proxies to emphasize requirements that remain poorly satisfied. For each fixed weighting, the inner loop trains a model on an ordinary expected-loss objective: sample a proxy according to its weight, sample an example from that proxy, and minimize the average loss over the resulting examples using SGD. Thus, each inner problem has the familiar form of minimizing an average of individual losses, while the outer loop handles the requirement thresholds and combines the learned models into an ensemble. Our oracle assumption formalizes the practical premise that motivates much of deep learning: an optimizer such as SGD can find a sufficiently good model for such an objective. The assumption concerns population performance for every weighting encountered, so it is stronger than merely reducing a minibatch or training loss. Subject to this assumption, boosting converts progress on these ordinary learning problems into simultaneous satisfaction of the training requirements.

\subsection{Margin realizability and a population learning oracle}

We focus on $L(u,\tau)=\mathbf{1}[u>\tau]$ and assume that inner losses and thresholds lie in $[0,1]$. Suppose there exists a fixed $h^*\in\mathcal H$ such that
\begin{equation}\label{eq:proxy-realizability}
\mathbb P_{t\sim\mathcal T}\bigl(r_t(h^*)\leq\tau_t-\gamma\bigr)=1
\end{equation}
for some $\gamma\in(0,1)$. This is an analogue of separability with margin: one hypothesis satisfies almost every requirement with a common positive slack.

Draw $n$ training proxies $t_i=(\ell_i,\mathcal D_i,\tau_i)$ independently from $\mathcal T$, and write $r_i(h)=\E_{x\sim\mathcal D_i}\ell_i(h,x)$. With probability one, the same $h^*$ satisfies all sampled proxies with margin $\gamma$. Let $\Delta_n=\{q\in[0,1]^n:\sum_iq_i=1\}$ be the simplex of distributions over these proxies. Then
\[
\begin{aligned}
\frac1n\sum_{i=1}^n\mathbf{1}[r_i(h^*)>\tau_i-\gamma]
&=\max_i\mathbf{1}[r_i(h^*)>\tau_i-\gamma]\\
&=\max_{q\in\Delta_n}\sum_{i=1}^n q_i
\mathbf{1}[r_i(h^*)>\tau_i-\gamma]=0.
\end{aligned}
\]
Moreover, the signed violations satisfy
\[
\max_{q\in\Delta_n}\sum_{i=1}^nq_i\bigl(r_i(h^*)-\tau_i\bigr)
=\max_i\bigl(r_i(h^*)-\tau_i\bigr)\leq-\gamma.
\]
Thus, the same hypothesis achieves a negative margin under every weighting of the training proxies. This suggests an iterative procedure that emphasizes poorly satisfied proxies while asking the learner to achieve a negative weighted violation. In this section, the learning oracle acts on their true expected losses; there is no preliminary step of replacing them by a fixed empirical sample.

For any distribution $q$ over the training proxies, assume that the oracle returns $h_q\in\mathcal H$ satisfying
\begin{equation}\label{eq:weighted-learning-oracle}
\sum_{i=1}^n q_i r_i(h_q)
\leq\inf_{h\in\mathcal H}\sum_{i=1}^n q_i r_i(h)+\xi,
\qquad 0\leq\xi<\gamma.
\end{equation}
Realizability immediately implies a weighted improvement over the thresholds:
\begin{equation}\label{eq:proxy-weighted-edge}
\sum_{i=1}^nq_i\bigl(r_i(h_q)-\tau_i\bigr)
\leq-\rho,\qquad \rho=\gamma-\xi>0.
\end{equation}
The analysis only needs Equation~\ref{eq:proxy-weighted-edge}; approximate weighted minimization is a sufficient condition. We require the guarantee for each oracle call, including calls whose weights depend on earlier samples. This is an assumption about the learning procedure, not a consequence of realizability or a claim that SGD always finds a global optimum.

The weighted population objective can be accessed by sampling proxy $i$ with probability $q_i$, drawing $x\sim\mathcal D_i$, and using the loss $\ell_i(h,x)$. When differentiation and expectation can be interchanged, this also gives an unbiased stochastic gradient. Thus, the oracle need not explicitly calculate population risks. Its sample and computational requirements are separate from the samples used below to update the proxy weights.

\subsection{Proxy-boosting with sampled weight updates}

Although the oracle guarantee concerns expected losses, the algorithm updates its weights using only fresh observations. Initialize $q_i^{(1)}=1/n$ and fix a number of rounds $B$, a learning rate $\eta>0$, and a batch size $k\geq1$. For $s=1,\ldots,B$:
\begin{enumerate}
\item Apply the population learning oracle with weights $q^{(s)}$ to obtain $h_s$.
\item After $h_s$ is chosen, draw $k$ fresh examples $x_{i,1}^{(s)},\ldots,x_{i,k}^{(s)}$ from each $\mathcal D_i$ and compute
\[
\widehat r_i^{(s)}=\frac1k\sum_{j=1}^k\ell_i(h_s,x_{i,j}^{(s)}),
\qquad \widehat v_i^{(s)}=\widehat r_i^{(s)}-\tau_i.
\]
These examples are independent across proxies and within each batch conditional on the preceding history and $h_s$. They are fresh relative to that history, including any samples used to select $h_s$.
\item Update the proxy distribution by
\[
q_i^{(s+1)}=
\frac{q_i^{(s)}\exp(\eta\widehat v_i^{(s)})}
{\sum_{a=1}^n q_a^{(s)}\exp(\eta\widehat v_a^{(s)})}.
\]
\end{enumerate}
The update emphasizes proxies with larger observed signed violations. It follows the boosting principle of emphasizing difficult requirements, using multiplicative weights over proxies~\cite{arora2012multiplicative}. The signed violations retain the slack provided by the margin assumption; the update does not require accurate pass/fail classification of every proxy in every round.

Return the randomized ensemble $\pi$ that selects $h_s$ uniformly from $h_1,\ldots,h_B$. Writing $\ell_i(\pi,x)$ for its expected loss over this random choice gives
\[
\ell_i(\pi,x)=\E_{s\sim U([B])}[\ell_i(h_s,x)]
=\frac1B\sum_{s=1}^B\ell_i(h_s,x).
\]
Consequently, by linearity of expectation,
\[
r_i(\pi)=\E_{x\sim\mathcal D_i}\E_{s\sim U([B])}[\ell_i(h_s,x)]
=\frac1B\sum_{s=1}^B r_i(h_s).
\]
Operationally, the ensemble selects a constituent uniformly for an evaluation or episode. Its guarantee concerns expected loss over this internal randomization, not the performance of every constituent. 

\begin{theorem}[Population guarantee with sampled updates]\label{thm:proxy-boosting}
Assume Equation~\ref{eq:proxy-realizability} and that the oracle satisfies Equation~\ref{eq:weighted-learning-oracle} at every round. For any fixed $B,k\geq1$, $\eta>0$, and $\delta\in(0,1)$, the sampled-update algorithm satisfies, with probability at least $1-\delta$,
\begin{equation}\label{eq:sampled-proxy-bound}
\max_{i\in[n]}\bigl(r_i(\pi)-\tau_i\bigr)
\leq-\rho+\frac{\log n}{\eta B}+\frac\eta2
+2\sqrt{\frac{\log((n+1)/\delta)}{2Bk}},
\qquad \rho=\gamma-\xi.
\end{equation}
In particular, $k=1$, $\eta=\rho/2$, and
\[
B\geq\frac{32\log((n+1)/\delta)}{\rho^2}
\]
suffice to ensure $r_i(\pi)\leq\tau_i-\rho/4$ for every training proxy with probability at least $1-\delta$. The theorem holds conditional on any training-proxy sequence satisfying the realizability condition, and hence also over the random training proxies and subsequent sampling.
\end{theorem}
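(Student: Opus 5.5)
The plan is the standard multiplicative-weights regret argument run on the sampled signed violations, combined with martingale concentration to pass from sampled to true violations. Write $v_i^{(s)}=r_i(h_s)-\tau_i\in[-1,1]$ for the true signed violations and $\widehat v_i^{(s)}\in[-1,1]$ for the sampled ones.

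\textbf{Regret step.} The Hedge update with gains $\widehat v^{(s)}$ and rate $\eta$ gives, deterministically and for every $i\in[n]$,
\[
\sum_{s=1}^B \widehat v_i^{(s)}\leq \sum_{s=1}^B\sum_{a}q_a^{(s)}\widehat v_a^{(s)}+\frac{\log n}{\eta}+\frac{\eta B}{2}.
\]
I will prove this by the usual potential $\Phi_s=\sum_a \exp(\eta\sum_{s'<s}\widehat v_a^{(s')})$. Hoeffding's lemma on the $[-1,1]$-valued variable $\widehat v^{(s)}_a$ under $q^{(s)}$ gives
\[
\log(\Phi_{s+1}/\Phi_s)\leq \eta\sum_a q_a^{(s)}\widehat v_a^{(s)}+\eta^2/2,
\]
while $\log\Phi_{B+1}\geq \eta\sum_s\widehat v_i^{(s)}$ and $\log\Phi_1=\log n$.

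\textbf{Concentration step.} This step is the main technical point. Conditional on the history and $h_s$, the weights $q^{(s)}$ are fixed and $\E[\widehat v_a^{(s)}]=v_a^{(s)}$. The quantity $\sum_a q_a^{(s)}\widehat v_a^{(s)}$ is an average of the $nk$ fresh bounded losses weighted by $q_a^{(s)}/k$, so its conditional sub-Gaussian parameter is $\sum_a (q_a^{(s)})^2/k\leq 1/k$. Hence the martingale difference sum
\[
M=\sum_s\sum_a q_a^{(s)}\bigl(\widehat v_a^{(s)}-v_a^{(s)}\bigr)
\]
satisfies an Azuma--Hoeffding bound: with probability $1-\delta/(n+1)$, $M\leq\sqrt{B\log((n+1)/\delta)/(2k)}$. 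Here I use the range-$1$ Hoeffding lemma for losses in $[0,1]$, applied conditionally per round, since the Lipschitz constant of the weighted average in each loss is $q_a/k$. Similarly, for each $i$, $\sum_s(v_i^{(s)}-\widehat v_i^{(s)})\leq\sqrt{B\log((n+1)/\delta)/(2k)}$ with probability $1-\delta/(n+1)$. A union bound over these $n+1$ events gives the factor $2$ and the $\log((n+1)/\delta)$. The key care needed is that the freshness assumption makes these genuine martingale differences, even though $h_s$ and $q^{(s)}$ depend on the past.

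\textbf{Combining.} The oracle edge, Equation~\ref{eq:proxy-weighted-edge}, gives $\sum_a q_a^{(s)}v_a^{(s)}\leq-\rho$ for every round. Chaining the two steps and dividing by $B$, with $r_i(\pi)-\tau_i=\frac1B\sum_s v_i^{(s)}$ by linearity, yields Equation~\ref{eq:sampled-proxy-bound} for every $i$ simultaneously.

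\textbf{Instantiation.} Take $k=1$, $\eta=\rho/2$, and $B\geq 32\log((n+1)/\delta)/\rho^2$. Then $\frac{\log n}{\eta B}\leq \rho/16$, using $\log n\leq\log((n+1)/\delta)$, and $\eta/2=\rho/4$. The last term is
\[
2\sqrt{\frac{\log((n+1)/\delta)}{2B}}\leq 2\sqrt{\rho^2/64}=\rho/4.
\]
The total is therefore at most $-\rho+\rho/16+\rho/4+\rho/4<-\rho/4$. Conditioning on the training-proxy sequence is harmless, since all randomness used above is the subsequent sampling.
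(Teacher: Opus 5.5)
Your proposal is correct and follows the paper's proof essentially step for step. The paper uses the same Hedge-potential bound on the sampled violations via Hoeffding's lemma, and the same per-round conditional MGF bounds, giving sum of squared ranges $\sum_a (q_a^{(s)})^2/k\le 1/k$, chained over rounds for the $n$ individual error sums and the one weighted error sum, followed by a union bound over these $n+1$ events; your instantiation is slightly sharper than needed ($\log n/(\eta B)\le\rho/16$ rather than the paper's $\rho/4$), which is harmless.
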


The proof in Appendix~\ref{app:proxy-boosting} combines an exponential-potential argument with concentration of accumulated sampling errors. The inequality $\mathbf{1}[z\geq0]\leq e^z$ connects the exponential potential to threshold violations. A proxy with persistently positive cumulative violation would acquire a large exponential weight. The oracle's negative weighted population violation prevents this, up to sampling error. Freshness makes the observed losses conditionally unbiased even though the weights and hypotheses are adaptive. It is therefore enough to control accumulated error over rounds, rather than demanding an accurate estimate in each round. The two sampling-error terms account for individual proxies and for the changing weighted average used by the oracle.

With $k=1$, the algorithm uses $O(\rho^{-2}\log((n+1)/\delta))$ oracle calls and $nB$ fresh examples for weight updates. These counts exclude samples used internally by the oracle; the theorem does not specify the total sample complexity of implementing that oracle. No additional proxies are drawn during boosting.

The guarantee is zero pass/fail loss on the true expected losses of all training proxies, rather than on a finite empirical sample. Generalization to unseen requirements remains a separate question, addressed by independent validation or an additional proxy-complexity analysis.

\section{Related Work}
\label{sec:related-work}

\paragraph{Alignment, reward hacking, and generalization.}
The mismatch between intended behavior and optimized objectives is a central concern in AI safety, including reward hacking and failures under distribution shift~\cite{amodei2016concrete}.
Formal studies characterize reward hacking through the relationship between proxy and true rewards~\cite{skalse2022reward}, including conditions under which their correlation under a reference policy supports guarantees for policies regularized toward it~\cite{laidlaw2025correlated}.
Goal misgeneralization studies already connect alignment failures to generalization: an agent may retain its capabilities in new situations while pursuing an unintended goal~\cite{langosco2022goal,shah2022goal}.
Aligned-PAC makes a specific generalization question explicit: does a common hypothesis satisfy unseen requirements drawn from a distribution over loss functions, example distributions, and thresholds? Its guarantees concern this represented population, without assuming access to a single true reward or resolving whether the population faithfully captures human intentions.

\paragraph{Constrained learning and proxy-boosting.}
PAC constrained learning studies requirements of the form $\E_{x\sim\mathcal D_i}[\ell_i(h,x)]\leq\tau_i$, allowing different losses and distributions across constraints~\cite{chamon2020constrained}.
Thus, an individual Aligned-PAC proxy has a direct predecessor in constrained learning. Our additional emphasis is on a sampled population of requirements and generalization beyond the observed constraints. Oracle reductions for fair classification already combine exponentiated-gradient updates with randomized classifiers~\cite{agarwal2018reductions}.
Proxy-boosting adapts this established approach to thresholded requirements, using a population weighted-learning oracle and fresh sampled losses for weight updates. Its guarantee concerns the observed proxies; it does not establish generalization to unseen proxies.

\paragraph{Alignment verification.}
Value alignment verification formalizes query-efficient evaluation of whether an agent's behavior agrees with human values~\cite{brown2021verification}.
The Seldonian framework provides statistical guarantees against violating specified behavioral constraints, using independent data to assess a selected candidate~\cite{thomas2019preventing}.
Our validation problem introduces uncertainty both across proxies and within each proxy. We study the separate proxy and example budgets required to estimate the gap between training-proxy and population performance, as well as a margin-based certificate for population requirement satisfaction.

\paragraph{Learning across distributions and tasks.}
Collaborative and multi-distribution learning study simultaneous performance across a specified collection of distributions, using boosting-style reweighting~\cite{blum2017collaborative} and game dynamics with adaptive sample allocation~\cite{haghtalab2022ondemand,peng2024multidistribution}.
These are close predecessors to optimization over our training proxies. Generalization to a population of tasks, with sampling at both task and example levels, also appears in Baxter's model of inductive bias learning~\cite{baxter2000bias}.
Aligned-PAC evaluates a common hypothesis against heterogeneous requirements, whereas inductive bias learning uses shared structure to support learning task-specific hypotheses. We leave the general proxy complexity of learning open, building on rather than replacing these existing theories.

\paragraph{Principles and generated evaluations.}
Constitutional AI uses collections of principles to guide AI feedback, while model-written evaluations and automated red teaming use AI systems to construct behavioral evaluations~\cite{bai2022constitutional,perez2022redteaming,perez2023evaluations}.
These provide practical precedents for AI-assisted proxy generation. Our framework asks how a generator defines a distribution of requirements and what independent sampling from that distribution can certify; validity, coverage, and shared blind spots remain separate challenges.

\section{Discussion and Open Problems}
\label{sec:discussion}

Aligned-PAC learning proposes to study alignment through generalization across requirements. Rather than encoding the intended behavior in a single operational criterion, we represent its different aspects by a distribution over proxies, each specifying a loss function, an example distribution, and an acceptable-loss threshold. This separates two questions: whether a model succeeds on the proxies used to train it, and whether that success extends to the broader population of requirements.

The paper develops this perspective in two directions. First, we give rigorous procedures for validating a fixed hypothesis. For the squared-violation objective, a nonadaptive multilevel estimator achieves optimal worst-case sample and proxy complexity for estimating the mis-alignment gap, up to universal constants. A complementary margin-based certificate provides a simpler procedure when all sampled proxies pass. Second, under margin realizability and a population weighted-learning assumption, proxy-boosting converts weighted learning into simultaneous satisfaction of the observed proxies. Fresh sampled losses suffice for its weight updates, and the resulting randomized ensemble satisfies the training proxies in terms of their true expected losses.

\paragraph{The proxy complexity of learning.}
The principal question left open is when success on finitely many training proxies implies success on unseen proxies. More precisely, what assumptions on the hypothesis class, the proxy distribution, and the learning algorithm yield a useful bound on the number of training proxies required to achieve a specified population performance? This is distinct from the proxy complexity of validation: evaluating a hypothesis fixed before sampling does not require controlling how the learning algorithm selects that hypothesis. The proxy-boosting theorem addresses optimization on the observed requirements, but does not establish generalization across requirements.

We leave this question open in its general form. Elementary bounds are possible under restrictive assumptions. For example, for a finite hypothesis class with exact access to each proxy's expected loss, concentration and a union bound give uniform convergence of the bounded outer losses across the class. The challenge is to obtain informative guarantees for expressive models and practical learning algorithms. Even in ordinary learning, available complexity measures often give bounds too loose to explain the observed generalization of modern neural networks. Generalization across proxies introduces an additional question: which properties of the requirements, and of the learner's response to them, determine the effective complexity? A useful theory should also account for finite example budgets, adaptive allocation, and the interaction between sample and proxy complexity. Developing such a theory is beyond the scope of the present paper.

\paragraph{Constructing the proxy distribution.}
All of our guarantees are relative to the chosen proxy distribution. How should this distribution be constructed so that it faithfully represents the intended objective? A large collection of proxies is useful only to the extent that it captures relevant requirements; many near-duplicate proxies may leave important aspects of the task unspecified. The choice of probability mass also determines the relative importance of different requirements. Understanding how proxy diversity, shared structure, and coverage affect alignment is therefore central to applying the framework.

One promising direction is to use an AI agent as a \emph{proxy generator}. Given a description of the intended behavior, such an agent could propose requirements, construct example distributions that exercise them, and suggest loss functions and thresholds. It could also search for missing requirements by examining failures of candidate models. This raises concrete research questions: how to assess the validity and coverage of generated proxies, how to avoid shared blind spots between the generator and the learner, and how to assign weight to the resulting requirements. A fixed generator with independent randomness provides one way to define a proxy distribution for training and validation. Generators that adapt to the learner's behavior may discover more informative requirements, but require an analysis beyond the fixed-distribution sampling assumptions used here.

\paragraph{Implementing the learning oracle.}
The weighted-learning assumption isolates another open problem: obtaining a reliable improvement under changing proxy weights with a practical learning procedure. The sample and computational costs of implementing this oracle, and the effect of occasional failures of its guarantee, are not covered by the proxy-boosting theorem. Studying these questions together with generalization across proxies would connect the optimization framework to end-to-end learning guarantees.

\appendix
\section{Analysis of the Equal-Allocation Estimator}
\label{app:equal-validation}

Condition throughout on the training process, so that $h$ and $\mathcal T_{tr}$ are fixed. For any proxy $t$, set $f_t(u)=[u-\tau_t]_+^2$, let $r_t=\E\ell_t(h,X)$, and let $\bar X_k$ denote the mean of $k$ independent inner losses. The function $f_t$ is convex and has $2$-Lipschitz derivative. Thus, for all $u,v$,
\[
0\leq f_t(u)-f_t(v)-f_t'(v)(u-v)\leq(u-v)^2.
\]
Since $\E\bar X_k=r_t$ and $\operatorname{Var}(\bar X_k)\leq1/(4k)$, taking expectations gives
\begin{equation}\label{eq:squared-bias}
0\leq\E f_t(\bar X_k)-f_t(r_t)\leq\frac1{4k}.
\end{equation}
The same bounds hold after averaging over any proxy distribution $\mathcal Q$.

\begin{proof}[Proof of Theorem~\ref{thm:equal-validation}]
For either $\mathcal Q$, the $N$ empirical proxy scores are independent and lie in $[0,1]$. Hoeffding's inequality and Equation~\ref{eq:squared-bias} imply, with failure probability at most $\delta/2$,
\[
|\widehat R_{N,k}(h;\mathcal Q)-R(h;\mathcal Q)|
\leq\sqrt{\frac{\log(4/\delta)}{2N}}+\frac1{4k}
\leq\frac\epsilon2.
\]
A union bound for the two risks gives gap error at most $\epsilon$. The budgets are $N$ fresh population proxies and $2Nk$ fresh examples.
\end{proof}

For the bias example in the main text, take inner losses distributed as $\operatorname{Bernoulli}(1/2)$ and threshold $1/2$. Symmetry about $1/2$ gives
\[
\E[\bar X_k-1/2]_+^2
=\frac12\E(\bar X_k-1/2)^2=\frac1{8k},
\]
whereas $f_t(r_t)=0$.

\section{Analysis of the Multilevel Estimator}
\label{app:multilevel-validation}

We give a self-contained construction for estimating one risk $R(h;\mathcal Q)$ to accuracy $\eta$, and then apply it to the two risks defining $G(h)$. Condition throughout on the training process, so that $h$ and $\mathcal T_{tr}$ are fixed. Here $\mathcal Q$ is either $\mathcal T$ or $\mathcal T_{tr}$. Recall that all inner losses and thresholds lie in $[0,1]$. For each proxy $t=(\ell_t,\mathcal D_t,\tau_t)$, define
\[
f_t(u)=L(u,\tau_t)=[u-\tau_t]_+^2,
\qquad r_t(h)=\E_{x\sim\mathcal D_t}\ell_t(h,x).
\]
Thus, $R(h;\mathcal Q)=\E_{t\sim\mathcal Q}f_t(r_t(h))$. The function $f_t$ is convex and its derivative is $2$-Lipschitz.

\paragraph{One correction and its independent copies.}
To generate a level-$j$ correction, first draw $t\sim\mathcal Q$, then draw $x_1,\ldots,x_{2^j}$ independently from $\mathcal D_t$ and write $X_b=\ell_t(h,x_b)$. At level zero, set $\Delta_0=f_t(X_1)$. For $j\geq1$, let $q=2^{j-1}$ and set
\[
A_j=\frac1q\sum_{b=1}^q X_b,
\qquad B_j=\frac1q\sum_{b=q+1}^{2q}X_b,
\qquad
\Delta_j=f_t\left(\frac{A_j+B_j}{2}\right)
-\frac{f_t(A_j)+f_t(B_j)}2.
\]
The two halves use the same proxy and are independent conditional on that proxy. The full-sample score uses these same two halves; no additional examples are drawn to compute it.

For chosen integers $J\geq0$ and $N_0,\ldots,N_J\geq1$, repeat this construction $N_j$ times at level $j$. Denote the resulting corrections by $\Delta_j^{(1)},\ldots,\Delta_j^{(N_j)}$. Each copy has its own independently drawn proxy and fresh examples; this independence also holds across levels. A repeated training-proxy index still receives fresh examples. One run of the risk estimator is
\[
Z_{\mathcal Q}=\sum_{j=0}^J\frac1{N_j}\sum_{a=1}^{N_j}\Delta_j^{(a)}.
\]
We specify $J$ and $N_j$ below after bounding its bias and variance.

\paragraph{Telescoping and bias.}
For comparison, let $P_j$ denote the empirical score obtained from a single proxy and $2^j$ examples:
\[
P_j=f_t\left(2^{-j}\sum_{b=1}^{2^j}\ell_t(h,x_b)\right),
\qquad t\sim\mathcal Q,\quad x_b\mid t\overset{\mathrm{i.i.d.}}\sim\mathcal D_t.
\]
Here $P_j$ describes a distribution of scores; it is not an additional term to sample when computing $Z_{\mathcal Q}$. Expectations below average over both the proxy and its examples. In a level-$j$ correction, the full-sample score has the distribution of $P_j$, while each half-sample score has the distribution of $P_{j-1}$. Therefore,
\[
\E\Delta_0=\E P_0,\qquad
\E\Delta_j=\E P_j-\E P_{j-1}\quad(j\geq1),
\]
and the sum telescopes:
\[
\E Z_{\mathcal Q}=\sum_{j=0}^J\E\Delta_j=\E P_J.
\]
The following lemma bounds the bias of an empirical score.

\begin{lemma}[Bias of the empirical squared violation]\label{lem:empirical-squared-bias}
Fix a hypothesis $h$ and a proxy $t$. Let $X_1,\ldots,X_k\in[0,1]$ be independent, identically distributed inner losses conditional on $t$, with conditional mean $r_t(h)$, and let $\bar X_k=k^{-1}\sum_{b=1}^k X_b$. For $f_t(u)=[u-\tau_t]_+^2$,
\[
0\leq\E[f_t(\bar X_k)\mid t]-f_t(r_t(h))
\leq\operatorname{Var}(\bar X_k\mid t)\leq\frac1{4k}.
\]
\end{lemma}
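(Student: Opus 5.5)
The plan is to sandwich $f_t$ between its tangent line and a quadratic upper bound at the point $r=r_t(h)$, and then take conditional expectations given $t$. Write $r=r_t(h)$ and $u=\bar X_k$. The function $f_t(u)=[u-\tau_t]_+^2$ is convex and continuously differentiable, with $f_t'(u)=2[u-\tau_t]_+$. This derivative is $2$-Lipschitz, because $a\mapsto[a]_+$ is $1$-Lipschitz. Two standard inequalities then hold for all $u,v$: convexity gives $f_t(u)-f_t(v)-f_t'(v)(u-v)\geq0$, and the $2$-Lipschitz derivative gives the descent-lemma bound $f_t(u)-f_t(v)-f_t'(v)(u-v)\leq\frac{2}{2}(u-v)^2=(u-v)^2$. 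The second inequality follows by writing the left side as $\int_0^1\bigl(f_t'(v+s(u-v))-f_t'(v)\bigr)(u-v)\,ds$ and bounding the integrand by $2s(u-v)^2$. These are exactly the inequalities recorded at the start of Appendix~\ref{app:equal-validation}, so the lemma is a restatement of Equation~\ref{eq:squared-bias} with the conditioning made explicit.

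Apply both inequalities with $v=r$ and $u=\bar X_k$, then take $\E[\,\cdot\mid t]$. Conditional on $t$, the $X_b$ are i.i.d. with mean $r$, so $\E[\bar X_k-r\mid t]=0$ and the linear term $f_t'(r)(\bar X_k-r)$ has zero conditional expectation. Here $f_t'(r)$ is deterministic given $t$. Everything is bounded in $[0,1]$, so all expectations exist. This leaves
\[
0\leq\E[f_t(\bar X_k)\mid t]-f_t(r)\leq\E[(\bar X_k-r)^2\mid t]=\operatorname{Var}(\bar X_k\mid t).
\]

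For the final bound, use conditional independence: $\operatorname{Var}(\bar X_k\mid t)=\operatorname{Var}(X_1\mid t)/k$. A $[0,1]$-valued variable with mean $r$ satisfies $\E X_1^2\leq\E X_1=r$, hence $\operatorname{Var}(X_1\mid t)\leq r(1-r)\leq1/4$. This gives the bound $1/(4k)$.

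No step is a real obstacle. The only points needing care are verifying the $2$-Lipschitz derivative, including the kink of $[\cdot]_+$ at $\tau_t$, where $f_t$ is still $C^1$, and keeping the argument conditional on $t$, so that $f_t'(r)$ may be pulled out of the expectation.
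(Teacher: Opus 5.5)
Your proof is correct and follows essentially the same route as the paper: convexity for the lower bound, the $2$-smoothness inequality with the vanishing linear term for the upper bound, and $\E[X_1^2\mid t]\leq\E[X_1\mid t]$ to get $\operatorname{Var}(X_1\mid t)\leq\mu(1-\mu)\leq 1/4$. The paper phrases the lower bound as Jensen's inequality rather than the tangent-line inequality, which is equivalent after taking expectations. Your integral derivation of the descent-lemma bound spells out a step the paper leaves implicit.
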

\begin{proof}
Write $\mu=r_t(h)=\E[\bar X_k\mid t]$. Convexity of $f_t$ and Jensen's inequality give the lower bound:
\[
\E[f_t(\bar X_k)\mid t]\geq f_t(\E[\bar X_k\mid t])=f_t(\mu).
\]
For the upper bound, the $2$-Lipschitz derivative of $f_t$ implies the smoothness inequality
\[
f_t(u)\leq f_t(\mu)+f_t'(\mu)(u-\mu)+(u-\mu)^2.
\]
Substituting $u=\bar X_k$ and taking conditional expectations eliminates the linear term, since $\E[\bar X_k-\mu\mid t]=0$. Hence,
\[
\E[f_t(\bar X_k)\mid t]-f_t(\mu)
\leq\E[(\bar X_k-\mu)^2\mid t]
=\operatorname{Var}(\bar X_k\mid t).
\]
Finally, conditional independence and identical distribution imply
\[
\operatorname{Var}(\bar X_k\mid t)
=\frac1{k^2}\sum_{b=1}^k\operatorname{Var}(X_b\mid t)
=\frac{\operatorname{Var}(X_1\mid t)}k.
\]
Because $0\leq X_1\leq1$, we have $X_1^2\leq X_1$ and therefore
\[
\operatorname{Var}(X_1\mid t)
=\E[X_1^2\mid t]-\mu^2
\leq\mu-\mu^2
=\mu(1-\mu)\leq\frac14.
\]
Combining these inequalities proves the claim.
\end{proof}

Applying Lemma~\ref{lem:empirical-squared-bias} with $k=2^J$ and averaging over $t$ yields
\begin{equation}\label{eq:multilevel-bias}
0\leq\E Z_{\mathcal Q}-R(h;\mathcal Q)\leq\frac1{4\cdot2^J}.
\end{equation}

\paragraph{Decay of the correction variance.}
The following second-moment bound also bounds $\operatorname{Var}(\Delta_j)$. It is the reason that fewer copies suffice at finer levels.

\begin{lemma}[Second moments of the corrections]\label{lem:correction-moment}
For every $j\geq0$, $\E\Delta_j^2\leq2^{-2j}$.
\end{lemma}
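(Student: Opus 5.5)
Two cases. At level zero, $\Delta_0=f_t(X_1)=[X_1-\tau_t]_+^2$ with $X_1,\tau_t\in[0,1]$, so $0\leq\Delta_0\leq1$ and $\E\Delta_0^2\leq1=2^{0}$. Nothing more is needed there. The substance is the case $j\geq1$. There I will show a pointwise bound $|\Delta_j|\leq(A_j-B_j)^2/4$, then take the second moment of $(A_j-B_j)^2$ conditional on the proxy.

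For the pointwise bound, write $m=(A_j+B_j)/2$ and $d=(A_j-B_j)/2$, so that $A_j=m+d$ and $B_j=m-d$. Apply the two-sided smoothness inequality from Appendix~\ref{app:equal-validation}, namely $0\leq f_t(u)-f_t(v)-f_t'(v)(u-v)\leq(u-v)^2$, with $v=m$ and $u=m\pm d$. Averaging the two resulting inequalities cancels the linear terms, since the displacements $+d$ and $-d$ are opposite. This gives
\[
0\leq\frac{f_t(A_j)+f_t(B_j)}2-f_t(m)\leq d^2 .
\]
Hence $-d^2\leq\Delta_j\leq0$, so $\Delta_j^2\leq d^4=(A_j-B_j)^4/16$. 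This is exactly the antithetic cancellation described in the main text.

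For the moment bound, condition on $t$ and set $Y_b=X_b-r_t(h)$. These are independent, mean zero, and satisfy $|Y_b|\leq1$ and $\E Y_b^2\leq1/4$ (the variance bound of Lemma~\ref{lem:empirical-squared-bias}). With $q=2^{j-1}$,
\[
A_j-B_j=\frac1q\Bigl(\sum_{b\leq q}Y_b-\sum_{b>q}Y_b\Bigr)=\frac1q\sum_{b=1}^{2q}\epsilon_bY_b,
\]
with fixed signs $\epsilon_b\in\{\pm1\}$. I then expand the fourth moment of this sum of independent mean-zero terms. Only the diagonal terms and the paired terms survive:
\[
\E\Bigl(\sum_b\epsilon_bY_b\Bigr)^4=\sum_b\E Y_b^4+3\sum_{b\neq b'}\E Y_b^2\,\E Y_{b'}^2\leq 2q\cdot\frac14+3(2q)^2\cdot\frac1{16},
\]
where $\E Y_b^4\leq\E Y_b^2\leq1/4$ because $|Y_b|\leq1$.

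This constant bookkeeping is the only real obstacle: the target $2^{-2j}=1/(4q^2)$ must survive the factor $1/16$ and the division by $q^4$. Dividing by $q^4$ and by $16$ gives
\[
\E\Delta_j^2\leq\frac1{16q^4}\Bigl(\frac q2+\frac{3q^2}4\Bigr)=\frac1{32q^3}+\frac3{64q^2}.
\]
Since $q\geq1$, this is at most $\bigl(\tfrac1{32}+\tfrac3{64}\bigr)q^{-2}=\tfrac5{64}q^{-2}\leq\tfrac14q^{-2}=2^{-2j}$. Averaging over $t\sim\mathcal Q$ preserves the bound, and $\operatorname{Var}(\Delta_j)\leq\E\Delta_j^2$ gives the variance statement.
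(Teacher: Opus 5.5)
Your proof is correct and follows the paper's route: the pointwise bound $|\Delta_j|\leq\frac14(A_j-B_j)^2$ from smoothness, then a conditional fourth-moment expansion of $A_j-B_j$ written as a normalized sum of independent, mean-zero, bounded variables. The only difference is cosmetic. The paper pairs the halves into $q$ differences $W_b=X_b-X_{q+b}$ and uses the crude bounds $\E W_b^2,\E W_b^4\leq1$, obtaining $3/(16q^2)$. You center all $2q$ losses and use the variance bound $1/4$, obtaining the slightly sharper $5/(64q^2)$.
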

\begin{proof}
For $j=0$, the correction lies in $[0,1]$. For $j\geq1$, convexity and the $2$-Lipschitz derivative give
\[
|\Delta_j|\leq\frac14(A_j-B_j)^2.
\]
To bound its fourth-moment factor, condition on $t$ and put $q=2^{j-1}$. Pair each loss in the first half with one in the second half: with the notation above, let $W_b=X_b-X_{q+b}$ for $b=1,\ldots,q$. Then
\[
A_j-B_j=\frac1q\sum_{b=1}^q W_b.
\]
Conditional on $t$, the variables $W_b$ are independent, have mean zero, and satisfy $|W_b|\leq1$.

In the expansion of $(\sum_b W_b)^4$, any term containing an index exactly once has conditional expectation zero: independence factors out the mean of the corresponding $W_b$. Thus, only two types of terms remain: one index appearing four times, or two distinct indices each appearing twice. For each unordered pair $b<c$, there are $\binom42=6$ ways to place its two copies of each index among the four factors. Consequently,
\[
\begin{aligned}
\E\left[\left(\sum_{b=1}^q W_b\right)^4\middle|t\right]
&=\sum_{b=1}^q\E[W_b^4\mid t]
+6\sum_{1\leq b<c\leq q}\E[W_b^2\mid t]\E[W_c^2\mid t]\\
&\leq q+6\binom q2
=q+3q(q-1).
\end{aligned}
\]
The inequality uses $\E[W_b^4\mid t]\leq1$ and $\E[W_b^2\mid t]\leq1$, which follow from $|W_b|\leq1$. Dividing by $q^4$ gives
\[
\E[(A_j-B_j)^4\mid t]
\leq\frac{q+3q(q-1)}{q^4}
=\frac{3q^2-2q}{q^4}
\leq\frac3{q^2}.
\]
It follows that $\E[\Delta_j^2\mid t]\leq3/(16q^2)\leq2^{-2j}$. The bound holds uniformly in $t$, so it also holds unconditionally.
\end{proof}

\begin{proof}[Proof of Theorem~\ref{thm:multilevel-validation}]
We first obtain a constant-confidence estimate of each risk, then amplify confidence by taking a median, and finally count the total examples and proxies.

\emph{Allocation and accuracy of one run.} Set $\eta=\epsilon/2$, and define
\[
S_*=\sum_{j=0}^\infty2^{-j/2}=\frac1{1-2^{-1/2}},
\quad J=\left\lceil\log_2(1/\eta)\right\rceil,
\quad N_j=\left\lceil16S_*\eta^{-2}2^{-3j/2}\right\rceil.
\]
For one independent run of $Z_{\mathcal Q}$, Equation~\ref{eq:multilevel-bias} bounds the bias by $\eta/4$. Independence of correction replicates and Lemma~\ref{lem:correction-moment} give
\[
\operatorname{Var}(Z_{\mathcal Q})
\leq\sum_{j=0}^J\frac{2^{-2j}}{N_j}
\leq\frac{\eta^2}{16S_*}\sum_{j=0}^J2^{-j/2}
\leq\frac{\eta^2}{16}.
\]
Chebyshev's inequality therefore implies
\[
\mathbb P\bigl(|Z_{\mathcal Q}-R(h;\mathcal Q)|>\eta\bigr)
\leq\frac{\eta^2/16}{(3\eta/4)^2}=\frac19.
\]
\emph{Confidence amplification and the gap estimate.} Repeat the entire construction independently $B$ times and take the median, where $B$ is the smallest odd integer at least $8\log(2/\delta)$. Since a run fails with probability at most $1/4$, Hoeffding's inequality applied to the failure indicators bounds the probability of a majority of failures by $\exp(-B/8)\leq\delta/2$. Thus, the median estimates this risk to accuracy $\eta$ with failure probability at most $\delta/2$. Apply this procedure separately to the two risks and subtract their medians. A union bound establishes Equation~\ref{eq:gap-accuracy}.

\emph{Sample and proxy budgets.} One run uses exactly $\sum_{j=0}^J N_j2^j$ examples. Using $\lceil a\rceil\leq a+1$ and $2^J<2/\eta$,
\[
\sum_{j=0}^J N_j2^j
\leq16S_*\eta^{-2}\sum_{j=0}^J2^{-j/2}
+\sum_{j=0}^J2^j
\leq16S_*^2\eta^{-2}+\frac4\eta.
\]
It uses $\sum_j N_j\leq\sum_j N_j2^j=O(\eta^{-2})$ proxy draws. For $\mathcal Q=\mathcal T$, these are fresh population proxies; for $\mathcal Q=\mathcal T_{tr}$, they are draws of existing training-proxy indices. Repetition and estimating both risks therefore give the asserted budgets. Every allocation is predetermined, so these bounds hold deterministically, not merely in expectation.
\end{proof}

\section{Lower Bounds for Validation}
\label{app:validation-lower}

We use an estimation-to-testing reduction. Suppose two instances have gaps $G_0,G_1$ separated by more than $2\epsilon$. An estimator satisfying Equation~\ref{eq:gap-accuracy} on both instances induces a binary decision rule with error at most $\delta$ under either instance, by comparing its output to the midpoint of $G_0,G_1$. Let $P_0,P_1$ denote the distributions of the entire evaluation transcript. Data processing for relative entropy gives
\begin{equation}\label{eq:testing-kl}
D_{\mathrm{KL}}(P_0\Vert P_1)
\geq\operatorname{kl}(1-\delta,\delta)
=(1-2\delta)\log\frac{1-\delta}{\delta}
\geq\frac14\log(1/\delta)
\end{equation}
for $0<\delta\leq1/4$, where $\operatorname{kl}$ denotes binary relative entropy. We also use
\[
D_{\mathrm{KL}}(\operatorname{Bern}(p_0)\Vert\operatorname{Bern}(p_1))
\leq\frac{(p_0-p_1)^2}{p_1(1-p_1)}.
\]
Proxy distributions are available through the sampling interfaces specified in the framework; the unknown parameters are not revealed as part of their descriptions.

\begin{proof}[Proof of the sample lower bound in Theorem~\ref{thm:validation-lower}]
Take a singleton hypothesis class and two proxy types. Proxy $t_0$ has loss identically zero and threshold zero. Proxy $t_1$ has threshold zero and loss $X$, where $X\sim\operatorname{Bern}(p)$. Let $\mathcal T$ put probability $1/2$ on each type. Condition on a training outcome consisting only of $t_0$ proxies; this has positive probability, and its observations contain no information about $p$. Then
\[
R(h;\mathcal T_{tr})=0,\qquad G(h)=\tfrac12p^2.
\]
Take $p_0=1/2$, $p_1=1/2+6\epsilon$, and $0<\epsilon\leq1/24$. The gap separation is $3\epsilon+18\epsilon^2>2\epsilon$. Fresh proxy labels have the same distribution in both instances and provide no information about $p$; only examples from $t_1$ are informative. Each such example contributes at most $192\epsilon^2$ to the relative entropy. Even an evaluator allowed to choose proxy types directly and allocate samples adaptively, subject to a deterministic budget $M$, has transcript relative entropy at most $192M\epsilon^2$, by the chain rule. Equation~\ref{eq:testing-kl} therefore implies
\[
M\geq\frac{\log(1/\delta)}{768\epsilon^2}.
\]
This lower bound for the stronger access model also applies to the stated evaluation protocol.
\end{proof}

\begin{proof}[Proof of the proxy lower bound in Theorem~\ref{thm:validation-lower}]
Again use a singleton hypothesis class and two proxy types, now with deterministic inner losses zero and one, respectively, and thresholds zero. Let their probabilities under $\mathcal T$ be $1-p$ and $p$. Condition on a training outcome consisting only of the zero-loss proxy. The gap is $G(h)=p$. We may reveal the exact score of every proxy for free: neither additional examples nor repeated evaluation of an existing proxy supplies information about $p$.

For $p_0=1/2$ and $p_1=1/2+3\epsilon$, the gap separation is $3\epsilon>2\epsilon$. For $\epsilon\leq1/24$, each fresh proxy draw contributes at most $48\epsilon^2$ to the relative entropy. With a deterministic budget of $N$ fresh proxies, Equation~\ref{eq:testing-kl} gives
\[
N\geq\frac{\log(1/\delta)}{192\epsilon^2}.
\]
The chain-rule argument also permits adaptive stopping within that budget. Both constructions concern the conditional evaluation problem after training and establish the claimed worst-case lower bounds.
\end{proof}

\section{Proof of the Margin-Based Validation Certificate}
\label{app:margin-validation}

\begin{proof}[Proof of Theorem~\ref{thm:margin-validation}]
Fix $h$ and $\mathcal Q$. For a proxy $t$ satisfying $r_t(h)>\tau_t+\gamma$, the one-sided Hoeffding inequality gives
\[
\mathbb P\bigl(\widehat r_t(h)\leq\tau_t\mid t\bigr)
\leq\mathbb P\bigl(\widehat r_t(h)-r_t(h)\leq-\gamma\mid t\bigr)
\leq e^{-2k_v\gamma^2}.
\]
Thus, for one randomly drawn proxy and its examples,
\[
\begin{aligned}
\mathbb P\bigl(\widehat r_t(h)>\tau_t\bigr)
&\geq R_\gamma(h;\mathcal Q)\bigl(1-e^{-2k_v\gamma^2}\bigr).
\end{aligned}
\]
The $n_v$ proxy evaluations are independent, so whenever $R_\gamma(h;\mathcal Q)>\epsilon$,
\[
\begin{aligned}
\mathbb P(\text{all proxies pass})
&\leq\left[1-\epsilon\bigl(1-e^{-2k_v\gamma^2}\bigr)\right]^{n_v}\\
&\leq\exp\!\left[-n_v\epsilon\bigl(1-e^{-2k_v\gamma^2}\bigr)\right].
\end{aligned}
\]
The stated $k_v$ ensures $e^{-2k_v\gamma^2}\leq1/2$, and the stated $n_v$ makes the last expression at most $\delta$. If $R_\gamma(h;\mathcal Q)\leq\epsilon$, the claimed disjunction holds automatically. The budget bounds follow by multiplying $n_v$ by $k_v$.
\end{proof}

\section{Analysis of Proxy-Boosting}
\label{app:proxy-boosting}

\begin{proof}[Proof of Theorem~\ref{thm:proxy-boosting}]
Condition on a training-proxy sequence for which $r_i(h^*)\leq\tau_i-\gamma$ for all $i$. Such sequences have probability one. At round $s$, let $\mathcal F_s$ include the entire history immediately before the fresh weight-update examples are drawn, including the oracle output $h_s$ and weights $q^{(s)}$. Define
\[
v_i^{(s)}=r_i(h_s)-\tau_i,
\qquad e_i^{(s)}=\widehat r_i^{(s)}-r_i(h_s).
\]
The oracle assumption implies, for every round,
\begin{equation}\label{eq:population-proxy-edge-proof}
\sum_iq_i^{(s)}v_i^{(s)}
\leq\sum_iq_i^{(s)}(r_i(h^*)-\tau_i)+\xi
\leq-\gamma+\xi=-\rho.
\end{equation}

First apply the multiplicative-weights argument to the observed violations. Set $w_i^{(1)}=1$, $w_i^{(s+1)}=w_i^{(s)}e^{\eta\widehat v_i^{(s)}}$, and $W_s=\sum_iw_i^{(s)}$. Since $\widehat v_i^{(s)}\in[-1,1]$, Hoeffding's lemma gives the pathwise bound
\[
\log\frac{W_{s+1}}{W_s}
\leq\eta\sum_iq_i^{(s)}\widehat v_i^{(s)}+\frac{\eta^2}{2}.
\]
Summing over rounds, using $W_1=n$, and lower-bounding $W_{B+1}$ by each individual weight yields
\begin{equation}\label{eq:sampled-proxy-regret}
\max_i\frac1B\sum_{s=1}^B\widehat v_i^{(s)}
\leq\frac1B\sum_{s=1}^B\sum_iq_i^{(s)}\widehat v_i^{(s)}
+\frac{\log n}{\eta B}+\frac\eta2.
\end{equation}
This inequality holds for every realization of the data.

Next control the difference between observed and population losses. Conditional on $\mathcal F_s$, each $e_i^{(s)}$ is the average of $k$ independent centered variables with range length one. Therefore, for every real $\lambda$,
\[
\E\left[e^{\lambda e_i^{(s)}}\mid\mathcal F_s\right]
\leq\exp\left(\frac{\lambda^2}{8k}\right).
\]
For the weighted error $z_s=\sum_iq_i^{(s)}e_i^{(s)}$, conditional independence across proxies similarly gives
\[
\E\left[e^{\lambda z_s}\mid\mathcal F_s\right]
\leq\exp\left(\frac{\lambda^2}{8k}\sum_i(q_i^{(s)})^2\right)
\leq\exp\left(\frac{\lambda^2}{8k}\right).
\]
Iterating these conditional moment-generating-function bounds over rounds and applying the Chernoff bound shows, for any $a>0$,
\[
\mathbb P\left(-\frac1B\sum_s e_i^{(s)}>a\right)
\leq e^{-2Bka^2},
\qquad
\mathbb P\left(\frac1B\sum_s z_s>a\right)
\leq e^{-2Bka^2}.
\]
Set $a=\sqrt{\log((n+1)/\delta)/(2Bk)}$. A union bound over the $n$ individual inequalities and the one weighted inequality gives, with probability at least $1-\delta$,
\[
\frac1B\sum_s v_i^{(s)}
\leq\frac1B\sum_s\widehat v_i^{(s)}+a\quad\text{for every }i,
\qquad
\frac1B\sum_s\sum_iq_i^{(s)}\widehat v_i^{(s)}
\leq\frac1B\sum_s\sum_iq_i^{(s)}v_i^{(s)}+a.
\]
Combine these statements with Equations~\ref{eq:population-proxy-edge-proof} and~\ref{eq:sampled-proxy-regret}. Since $B^{-1}\sum_s v_i^{(s)}=r_i(\pi)-\tau_i$, the result is Equation~\ref{eq:sampled-proxy-bound}.

For the stated parameter choice, $\eta/2=\rho/4$, $\log n/(\eta B)\leq\rho/4$, and $2a\leq\rho/4$, leaving a margin of at least $\rho/4$. The argument allows all oracle choices and weights to depend on the preceding history; it only requires the oracle guarantee at each call and fresh subsequent weight-update samples. Finally, averaging over the training-proxy sequence preserves the probability bound.
\end{proof}

\end{document}